\documentclass[conference]{IEEEtran}
\IEEEoverridecommandlockouts
\usepackage{amsfonts}
\usepackage{cite,graphicx,amsmath,amsthm}
\allowdisplaybreaks[4]
\usepackage{subcaption}
\usepackage{fancyhdr}
\usepackage{dsfont}
\usepackage{array,color,xcolor}
\usepackage{bm}
\usepackage{booktabs}
\usepackage{multirow}
\usepackage{algorithm}
\usepackage{algpseudocode}

\usepackage{bbm}
\usepackage{graphicx}
\usepackage{amsmath}

\newtheorem{theorem}{Theorem}

\newtheorem{lemma}{Lemma}

\newtheorem{proposition}{Proposition}

\newtheorem{corollary}{Corollary}

\newtheorem{property}{Property}

\newtheorem{remark}{Remark}

\newtheorem{claim}{Claim}

\usepackage[a4paper, top=1.82cm, bottom=4.4cm, left=1.4cm, right=1.4cm]{geometry}

\allowdisplaybreaks[4]

\definecolor{purple2}{HTML}{EEE6FD}
\usepackage[table]{xcolor}
\usepackage{flushend}

\begin{document}

\title{\huge Memory Centric Power Allocation for Multi-Agent\\Embodied Question Answering
}

\author{
    \IEEEauthorblockN{
    Chengyang Li\IEEEauthorrefmark{1}, 
    Shuai Wang\IEEEauthorrefmark{2}, 
    Kejiang Ye\IEEEauthorrefmark{2}, 
    Weijie Yuan\IEEEauthorrefmark{3},
    Boyu Zhou\IEEEauthorrefmark{3}, \\
    Yik-Chung Wu\IEEEauthorrefmark{1},
    Chengzhong Xu\IEEEauthorrefmark{4}, \emph{Fellow, IEEE}
    , and Huseyin Arslan\IEEEauthorrefmark{5}, \emph{Fellow, IEEE}
    }
    \vspace{-0.2in}
    \thanks{\scriptsize   
    This work was supported by the National Key R\&D Program of China (No. 2025YFE0204100), Science and Technology Development Fund of Macao S.A.R (FDCT) under number 0074/2025/AMJ, the CAS-TUBITAK Joint Call under the International Partnership Program of the Chinese Academy of Sciences (Grant No. 321GJHZ2025118M) and The Scientific and Technological Research Council of Türkiye (Grant No. 225N080), and the Shenzhen Science and Technology Program (Grant No. RCYX20231211090206005). 

    \IEEEauthorrefmark{1} Chengyang Li and Yik-Chung Wu are with Department of Electrical and Computer Engineering, The University of Hong Kong, Hong Kong.
    
    \IEEEauthorrefmark{2} Shuai Wang and Kejiang Ye are with Shenzhen Institutes of Advanced Technology, Chinese Academy of Sciences, Shenzhen, China. 
    
    \IEEEauthorrefmark{3} Weijie Yuan and Boyu Zhou are with Southern University of Science and Technology, Shenzhen, China.
    
    \IEEEauthorrefmark{4} Chengzhong Xu is with University of Macau, Macau SAR, China.
    
    \IEEEauthorrefmark{5} Huseyin Arslan is with Istanbul Medipol University, Istanbul, Turkey.
    
    Corresponding author: Shuai Wang (s.wang@siat.ac.cn).}
}

\maketitle

\begin{abstract}
This paper considers multi-agent embodied question answering (MA-EQA), which enables robot teams to answer queries based on their long-horizon observations. In contrast to existing edge resource management methods that optimize sensing, communication, or computation performance metrics, MA-EQA focuses on the quality of aggregated memory. To address this paradigm shift, we propose a quality of memory (QoM) model based on generative adversarial exam (GAE), which leverages forward simulation to evaluate memory retrieval and utilizes the resulting exam scores to quantify QoM. Based on the QoM model, we develop a memory-centric power allocation (MCPA) scheme that maximizes memory quality under communication resource constraints. Through analytical characterization in the noise-limited regime, we reveal a GAE-augmented capped water-filling structure for MCPA. Extensive experiments demonstrate that MCPA achieves significant improvements over existing benchmarks across diverse metrics and scenarios. 
\end{abstract}

\begin{IEEEkeywords}
Embodied question answering, multi-agent systems, power allocation, quality of memory.
\end{IEEEkeywords}

\section{Introduction}

Embodied question answering (EQA) enables users to query the robots about what they have observed in their memories \cite{sermanet2024robovqa}.
Standalone EQA solutions are constrained by limited memory capacity \cite{anwar2025remembr}. 
To address this limitation, multi-agent EQA (MA-EQA) aggregates the local memory of distributed robots over wireless networks, building a global memory with increased spatial and temporal coverage \cite{li2025embodied,wang2026towards}.
However, in contrast to existing methods that adopt sensing coverage \cite{cheng2025development}, communication throughput \cite{ye2025integrated}, computation efficiency \cite{wang2026low}, or their combinations \cite{jiang2025integrated}, as design objectives, MA-EQA aims to maximize the memory quality.
Therefore, these schemes \cite{cheng2025development,ye2025integrated, wang2026low, jiang2025integrated} may overlook the non-uniform memory contributions of different agents, resulting in degraded EQA accuracy.

To prioritize transmissions based on data value, semantic communication (SemCom) \cite{liu2025intelligent,yan2022resource} reduces information redundancy while preserving semantic similarity (e.g., cosine similarity between embeddings). However, semantic similarity is an intermediate rather than end-to-end task metric for EQA, which may overlook subtle but critical differences in long memory contexts.
On the other hand, task-oriented communication \cite{wang2020machine,shi2023task,wen2023task,11143883} enables end-to-end optimization of inference performance within integrated sensing, communication, and computation systems. 
If the tasks are closed-set with well-defined object classes and labels \cite{wang2020machine,wen2023task,shi2023task}, the objective function can be explicitly formulated. 
In MA-EQA, however, the task is open-set, and the memory must be constructed without prior knowledge of the questions \cite{anwar2025remembr}, which makes it difficult to find a proper memory-oriented objective function.

To fill the gap, we introduce memory centric power allocation (MCPA) that unifies memorization and communication into an end-to-end optimization framework.
Our solution is inspired by a key observation: ``While the specific questions users may ask are infinite and unpredictable, the \emph{patterns and templates of these questions are finite and predictable}.'' 
As such, we build a generative adversarial exam (GAE) system with proxy questioner, enabling the \emph{forward simulation of potential EQA processes}. 
This enables us to transform the implicit EQA accuracy into the explicit quality of memory (QoM), which is defined as a composite function of GAE scores and communication rates.
By maximizing the QoM under communication resource constraints, MCPA leads to a paradigm shift from sensing, communication, or computation centric towards \emph{memorization centric optimization}.
To get deeper insights into MCPA, an analytical solution is derived for the noise-limited case. The solution reveals a GAE-augmented capped water-filling structure for MCPA. Extensive results in the high-fidelity CARLA simulation show that MCPA achieves significant improvements over existing schemes in terms of both QoM and EQA accuracy. 
By analyzing the communication rate and sensing coverage performances, it is found that communication and sensing functions are not always beneficial to memorization. Their positive contribution is conditioned on the associated QoMs.

\section{System Model}\label{section2}

We consider the MA-EQA system shown in Fig.~\ref{fig:fig1}, which consists of an edge server and $K$ robots (e.g., drones).
The system builds a long-horizon global memory $\mathcal{M}$ by aggregating distributed data $\mathcal{D}=\{\mathcal{D}_1,\cdots,\mathcal{D}_K\}$ from robots $\mathcal{K}=\{1,\cdots,K\}$.
Based on $\mathcal{M}$, MA-EQA outputs answers $\mathcal{A}=\{\mathbf{a}_1, \mathbf{a}_2,\cdots\}$ to questions $\mathcal{Q}=\{\mathbf{q}_1, \mathbf{q}_2,\cdots\}$, where $\mathbf{q}_j$ and $\mathbf{a}_j$ are text vectors of the $j$-th question and $j$-th answer, respectively. 
The design variables that can be controlled are the transmit powers of robots $\mathbf{p}=[p_{1},\cdots,p_{K}]^T$, whose feasible set is $\mathcal{P}=\{\mathbf{p}\succeq \mathbf{0}: \sum_{k=1}^{K}p_{k} \leq P_{\mathrm{sum}}\}$, with $P_{\mathrm{sum}}$ being the total power budget to ensure low power consumption and interference leakage.

\begin{figure}[!t]
    \centering
    \includegraphics[width=0.47\textwidth]{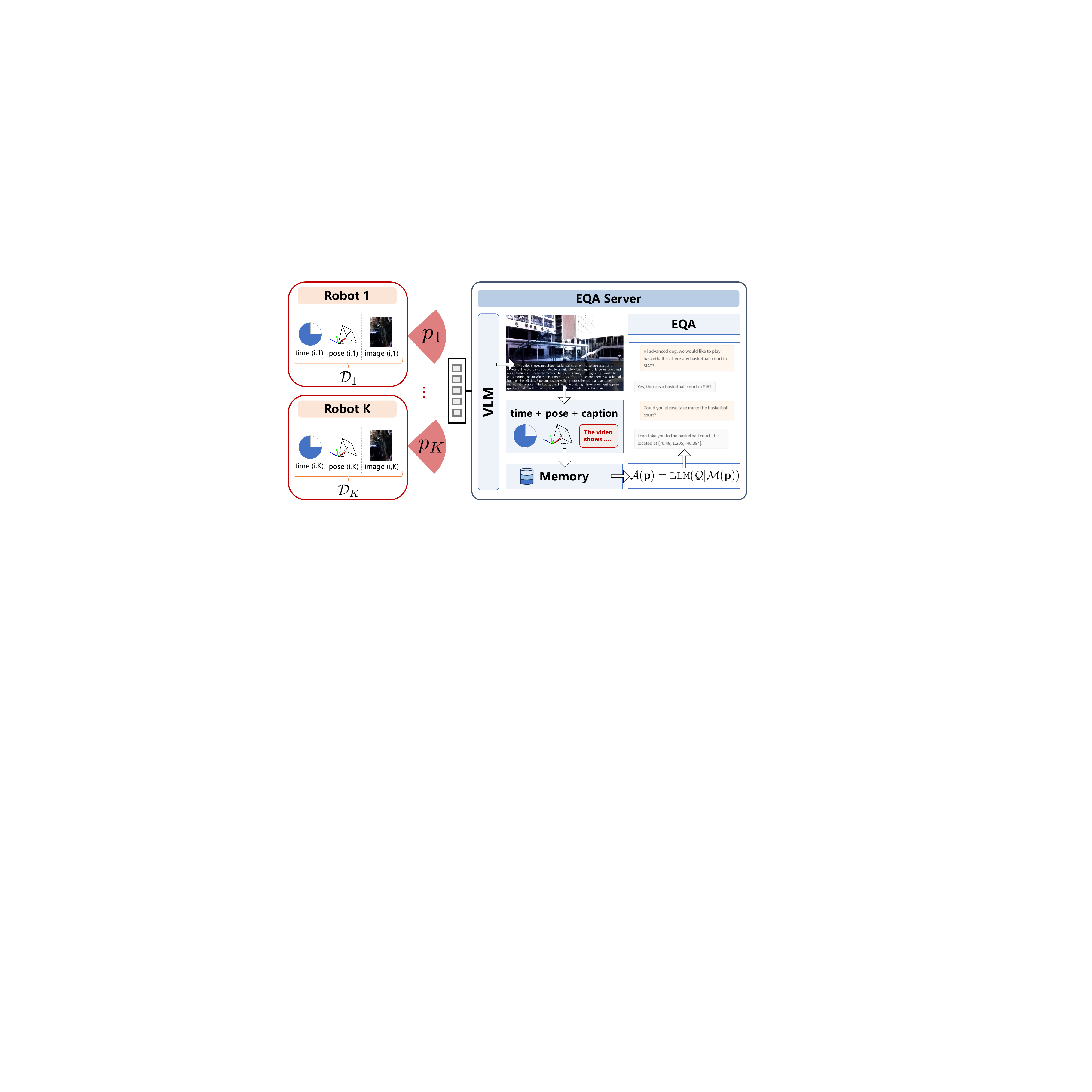}
    \caption{The MA-EQA system model.}
    \label{fig:fig1}
        \vspace{-0.15in}
\end{figure}

Each memory item is defined as a tuple of (time, pose, image) data $\mathcal{I}_{i,k}=(t_{i,k},\mathbf{s}_{i,k},\mathbf{v}_{i,k})$, where $t_{i,k}$ is the timestamp, $\mathbf{s}_{i,k}\in \mathbb{R}^{6}$ is the robot pose, $\mathbf{v}_{i,k}\in \mathbb{R}^{LWC}$ is the camera image, with $(L,W,C)$ denoting the length, width, and channel numbers for each image \cite{anwar2025remembr}.
Each robot possesses a set of items $\mathcal{D}_k=\left\{\mathcal{I}_{i,k}\right\}_{i=1}^{|\mathcal{D}_k|}$, where the data volume of each $\mathcal{I}_{i,k}$ is $Z_k$.
The multi-array channel from robots to server is $\mathbf{h}_k\in\mathbb{C}^N$, where $N$ is the number of antennas at server.
With this model, the channel gains under maximum ratio combining are given by $H_k= \|\mathbf{h}_k\|_2^2$ and 
$I_{k,j}=|\mathbf{h}_k^H\mathbf{h}_{j}|^2\left\Vert\mathbf{h}_{k}\right\Vert_2^{-2}$, 
where term $H_{k}=I_{k,k}$ represents the channel gain of robot $k$, while the term $I_{k,j}$ represents the interference channel between robots $j$ and $k$ \cite{wang2020machine}.
The number of frames uploaded from robot $k$ is
\begin{align}
    F_k(\mathbf{p})=\min\left[|\mathcal{D}_k|, \frac{TB}{Z_{k}}\mathrm{log}_2\left(1+\frac{H_{k}p_{k}}{\sum_{j\neq k}^KI_{k,j}p_{j}+
\sigma^2} \right)\right], \nonumber
\end{align}
where $T$ is the time budget, $B$ is the bandwidth, $\sigma^2$ is the power of complex Gaussian noise \cite{wang2020machine}.
The aggregated memory data is thus
\begin{align}
&\mathcal{E}(\mathbf{p}) =\bigcup_{1\leq k\leq K}
\bigcup_{i\in\mathcal{S}_k(\mathbf{p})}
\Big\{t_{i,k},\mathbf{s}_{i,k},\mathbf{v}_{i,k}\Big\},
\label{data}
\end{align}
where $\mathcal{S}_k(\mathbf{p})=\{
\mathcal{I}_{i_1,k},\cdots,\mathcal{I}_{i_{F_k(\mathbf{p})},k}
\}\subseteq\mathcal{D}_k$.

Next, the server captions images in $\mathcal{E}$ using vision-language model \texttt{VLM} as
$\mathbf{c}_{i,k} = \texttt{VLM}\left(\mathbf{v}_{i,k}\right), \ \forall x_{i,k}=1$,
which generates a set of captions $\mathcal{C} = \{\mathbf{c}_{i,k}\}$. 
We consider compressing $\mathcal{C}$ using a text embedding function $E$, and uses a vector database (e.g., \textit{Milvus}) to store (time, pose, embedding) pairs:
    \begin{align}\label{memory}
        \mathcal{M}(\mathbf{p})=&
        \mathcal{M}_0\bigcup 
        \bigcup_{1\leq k\leq K}
        \mathcal{M}_k,
        \\
        \mathcal{M}_k=&\bigcup_{i\in\mathcal{S}_k(\mathbf{p})}
        \left\{t_{i,k},\mathbf{s}_{i,k},E\Big[\texttt{VLM}\left(\mathbf{v}_{i,k}\right)\right]\Big\},
    \end{align}
where $\mathcal{M}_0$ is the historical database at the server before memory collection.
The \emph{vector memory} $\mathcal{M}$ is extremely efficient in search, and can be used to store millions of vector embeddings \cite{anwar2025remembr}.     
Conditioned on the memory $\mathcal{M}$ and questions $\mathcal{Q}$, the system aims to provide a set of answers 
$\mathcal{A}=\{\mathbf{a}_j\}$ such that 
\begin{align}\label{llm}
\mathcal{A}(\mathbf{p})=\texttt{LLM}(\mathcal{Q}|\mathcal{M}(\mathbf{p})),
\end{align}
where each $\mathbf{a}_j(\mathbf{p})=\texttt{LLM}(\mathbf{q}_j|\mathcal{M}(\mathbf{p}))$.

The goal of MA-EQA is to maximize the answer accuracy of $\mathcal{A}$ conditioned on $\mathcal{M}$.
Accordingly, we define the probability function as 
\begin{align}
    \Psi\left[\mathcal{M}(\mathbf{p})\right]
    :=
    \mathrm{Pr}\left(\mathcal{A}=\mathcal{A}^*|\mathcal{M}(\mathbf{p})\right),
\end{align}
where $\mathcal{A}^*$ is the ground truth answer set. 
Then we obtain the following MCPA problem:
\begin{align}
\mathsf{P}:~~\mathop{\mathrm{max}}_{\substack{\mathbf{p}\in\mathcal{P}}}
\quad & \Psi\left[\mathcal{M}(\mathbf{p})\right].
\end{align}
The function $\Psi$ maps a memory chunk $\mathcal{M}$ into a score between $0$ and $1$, i.e., $0\leq \Psi(\mathcal{M}) \leq 1$.
This enables us to distinguish the heterogeneous values of different memories. 
However, since $\mathcal{M}$ must be collected without knowing the question $\mathcal{Q}$ in advance, there is no access to $\Psi$ at the memory collection stage. 
This turns out to be the key challenge to solve $\mathsf{P}$.

\section{Quality of Memory Model}

In this section, we propose a novel QoM model to approximate $\Psi$. 
The key insight is that the potential question cannot be an arbitrary question.
The patterns and templates of questions are known in advance.
More importantly, to generate answers $\mathcal{A}$ for questions $\mathcal{Q}$, at least one robot must have seen the associated scene. Otherwise the question should be considered as invalid.
Based on the above insight, we can generate in-context questions $\mathcal{Q}_k$ from memory $\mathcal{M}_k$, and use $\mathcal{Q}_k$ to test the current memory $\mathcal{M}_0$. 
Because $\mathcal{Q}_k$ is at least answerable by $\mathcal{M}_k$, $\mathcal{Q}_k$ is guaranteed to be a valid question. 
Furthermore, if there exists any question in $\mathcal{Q}_k$ that cannot be answered by $\mathcal{M}_0$, then merging $\mathcal{M}_k$ is guaranteed to increase the QA capability of the system. 
In contrast, if $\mathcal{M}_0$ can answer all the questions in $\mathcal{Q}_k$, $\mathcal{M}_k$ would contribute to no new memory. 
This gives GAE shown in Fig.~\ref{fig:fig2} for computing QoM. 
Below we present the details of this framework.

\subsection{Generative Adversarial Exam}

The input of GAE consists of the distributed datasets $\{\mathcal{D}_k\}$ at robots and the pre-collection memory $\mathcal{M}_0$ at the server.
The output of GAE is the scores $\{\texttt{GAE}_k\in[0,1]\}$ for all $k$. The GAE pipeline is divided into the following steps.
\begin{itemize}
    \item[1)] Pilot upload: Each robot samples a sub-dataset $\{\widetilde{\mathcal{D}}_k \subseteq \mathcal{D}_k, \forall k\}$ randomly and uploads $\{\widetilde{\mathcal{D}}_k, \forall k\}$ as pilot data to the server. The sampling ratio of pilot data is defined as $\rho_k=\lvert \widetilde{\mathcal{D}}_k \rvert/\lvert \mathcal{D}_k \rvert$.  
    \item[2)] Exam generation: Edge server applies $\texttt{VLM}$ on $\{\widetilde{\mathcal{D}}_k\}$ to obtain pilot memory $\{\widetilde{\mathcal{M}}_k\}$. 
    For each $\widetilde{\mathcal{M}}_k$, we generate an exam $\{\widetilde{\mathcal{Q}}_k,\widetilde{\mathcal{A}}_k^*\}$ with $L_k$ question-answer pairs, by prompting LLM with $\widetilde{\mathcal{M}}_k$ in contexts, where $\widetilde{\mathcal{Q}}_k$ is the question set and $\widetilde{\mathcal{A}}_k^*$ is the correct answer set.    
    \item[3)] Practice test: We test the pre-collection memory $\mathcal{M}_0$ on $\widetilde{\mathcal{Q}}_k$, and compare the answer $\widetilde{\mathcal{A}}_k$ with ground truth $\widetilde{\mathcal{A}}_k^*$. The answer score of exam $\widetilde{\mathcal{Q}}_k$ is $\texttt{GAE}_k$. By iterating over all exams $\{\widetilde{\mathcal{Q}}_k\}$, we obtain the desired output $\{\texttt{GAE}_k\}$.
\end{itemize}

\begin{figure}[!t]
    \centering
    \includegraphics[width=\columnwidth]{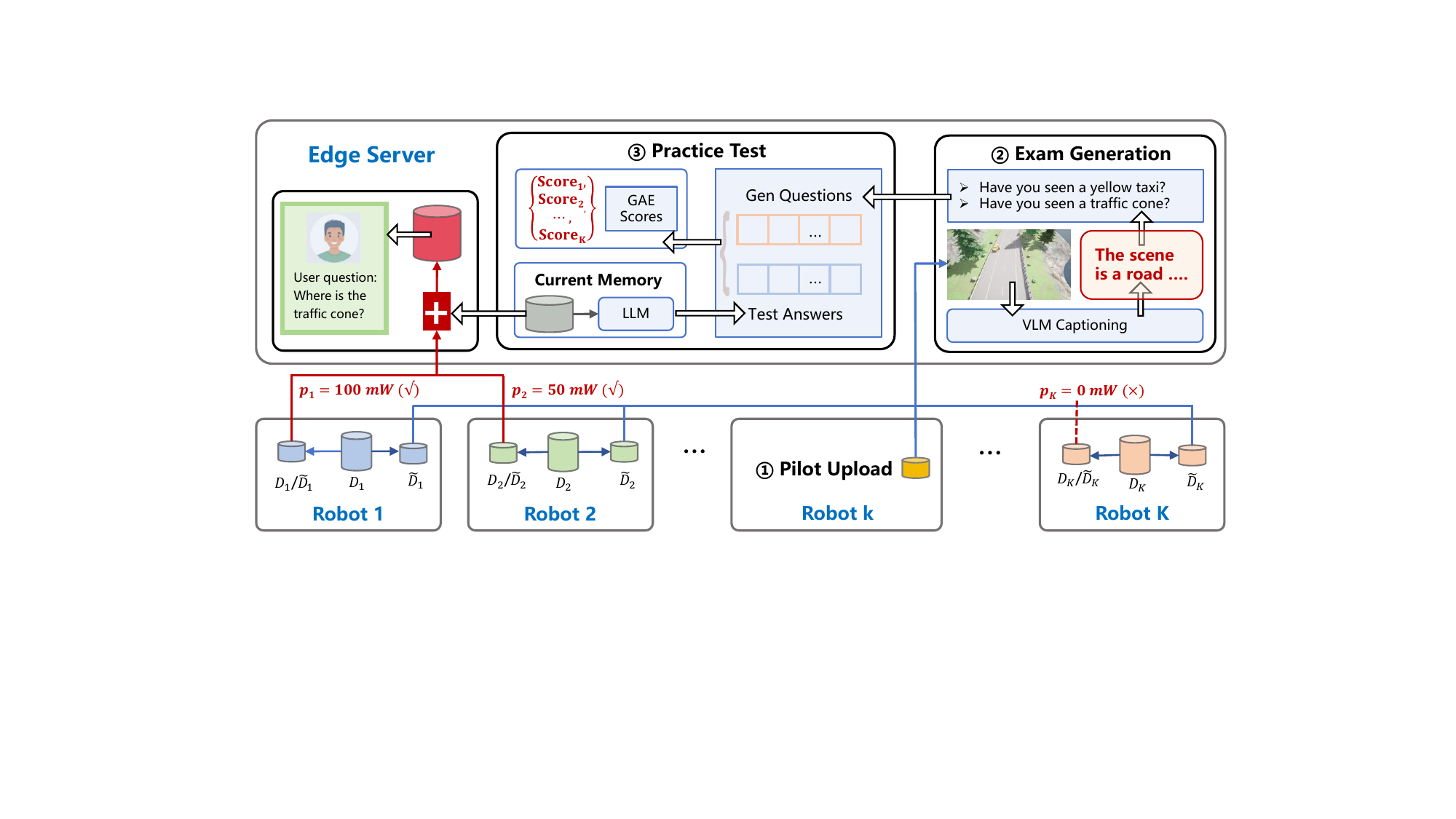}
    \caption{Architecture of GAE for computing QoM.}
    \label{fig:fig2}
    \vspace{-0.1in}
\end{figure}

\subsection{Quality of Memory}

For the above design, if we assume the memory $\mathcal{M}_k$ is perfect on its own exam $\widetilde{\mathcal{Q}}_k$, then the added memory $\mathcal{M}_k$ to $\mathcal{M}_0$ would result in an improvement of $1-\texttt{GAE}_k$ on $\widetilde{\mathcal{Q}}_k$. Therefore, the expected accuracy improvement brought by memory $\mathcal{M}_k$ is $\Delta \Psi_k=\eta_k
\left(1-\texttt{GAE}_k\right)$, where $\eta_k=\frac{|\mathcal{D}_k|}{\sum_{k=1}^K|\mathcal{D}_k|}$ is the percentage of a question coming from dataset $\mathcal{D}_k$. 
With an uploaded portion of 
$F_k(\mathbf{p})/|\mathcal{D}_k|$, the improvement brought by uploaded memory is $\Delta \widetilde{\Psi}_k(\mathbf{p})=\left(1-\texttt{GAE}_k\right)\frac{F_k(\mathbf{p})}{\sum_{k=1}^K|\mathcal{D}_k|}$.

We assume that each uploaded memory chunk contributes independently to the QA capability.
Consequently, the final accuracy is approximated by the sum of $\mathcal{M}_0$-accuracy and the accuracy improvement:
\begin{align}\label{gae}
\Psi\left[\mathcal{M}(\mathbf{p})\right]
\approx
\Psi\left[\mathcal{M}_0\right]
+
\sum_{k=1}^K\left(1-\texttt{GAE}_k\right)\frac{F_k(\mathbf{p})}{\sum_{k=1}^K|\mathcal{D}_k|}, 
\end{align}
where $\Psi\left[\mathcal{M}_0\right]$ is a constant. 
Putting $F_k(\mathbf{p})$ into equation \eqref{gae}, and keeping the terms related to $\mathbf{p}$, 
we obtain the QoM model 
\begin{align}\label{qom}
\texttt{QoM}(\mathbf{p}):=
\sum_{k=1}^K\min\left[
\Delta \Psi_k, {\Theta}_{k}(\mathbf{p})
\right],
\end{align}
where we have defined
\begin{align}
{\Theta}_{k}(\mathbf{p})=\frac{\left(1-\texttt{GAE}_k\right)\widetilde{T}B\mathrm{log}_2\left(1+\frac{H_{k}p_{k}}{\sum_{j\neq k}^KI_{k,j}p_{j}+
\sigma^2} \right)}{Z_{k}\sum_{k=1}^K|\mathcal{D}_k|},
\nonumber
\end{align}
The effective transmission time $\widetilde{T}=T-\Delta T$, where $\Delta T$ is the GAE overhead.
It can be written as $\Delta T=T_{\mathrm{Pilot}}+T_{\mathrm{GAE}}$, 
where $T_{\mathrm{Pilot}}$ is the pilot upload cost, and 
$T_{\mathrm{GAE}}$ is the exam computation cost.

\subsection{Overhead Analysis}

This section will show that $\Delta T$ is much smaller than $T$.
First, since we only need tens of image samples to estimate the memory quality, we can adopt a small sampling ratio with $|\widetilde{\mathcal{D}}_{k}| \ll |{\mathcal{D}}_{k}|$.
By 
\begin{align}
&T_{\mathrm{Pilot}}=\max_{k\in\mathcal{K}}\frac{Z_k|\widetilde{\mathcal{D}}_{k}|}{B
\mathrm{log}_2\left(1+\frac{H_{k}P_{\mathrm{sum}}}{\sum_{j\neq k}^KI_{k,j}P_{\mathrm{sum}}+
\sigma^2K} \right)},
\end{align}
we obtain $T_{\mathrm{Pilot}} \ll T$.
On the other hand, let $T_0$ denote per-QA latency.
For $N_Q=\sum_{k=1}^K L_k$ independent QAs, $N_p$ workers yield $T_{\mathrm{GAE}}\approx\left\lceil N_Q/N_p\right\rceil T_0$. Thus $T_{\mathrm{GAE}}\to T_0$ as $N_p\to N_Q$.
In the following, we will show that $T_{0}\ll T$.

Specifically, if we feed all the memory as a context into the LLM, GAE will scan all information in $\mathcal{M}_0$, which are computationally expensive for transformers. 
However, for a given question, a large history is not required to provide a correct answer.
Instead, only a small subset of the memory ${\mathcal{S}}_0\in\mathcal{M}_0$ is sufficient \cite{anwar2025remembr}.
This converts the one-shot long-horizon inference into a sub-memory search problem, which finds the optimal subset ${\mathcal{S}}^*_0\in\mathcal{M}_0$ to help the LLM generate the correct answer 
    \begin{align}\label{imr}
        \widetilde{\mathcal{A}}_k=\texttt{LLM}(\widetilde{\mathcal{Q}}_k|\mathcal{S}_0^*),
        \ \tilde{\mathcal{S}}_0^*\in\mathcal{M}_0.
    \end{align}
To solve this problem, we adopt iterative memory retrieval in \cite{anwar2025remembr}, where in each iteration we sample a subset $\mathcal{S}'\in\mathcal{M}_0$ and see if question $\mathbf{q}\in\widetilde{\mathcal{Q}}_k$ can be answered by $\mathcal{S}'$. If so, we generate an output formatted as a json file with keys for times, poses, texts. Otherwise, the process repeats by retrieving next-round memory chunk. The chunk is selected by an LLM augmented by the current context. 
The process terminates until the right memory chunk is found, or the maximum number of retrievals is reached. 
According to \cite{anwar2025remembr}, the per-QA latency $T_0$ is only seconds, far below full-memory inference.

\section{Memory Centric Power Allocation}\label{section4}

With QoM model, the original implicit problem $\mathsf{P}$ is converted into the following explicit form
\begin{align}
\mathsf{P}1:~~\mathop{\mathrm{max}}_{\substack{\mathbf{p}\in\mathcal{P}}}
\quad& 
\sum_{k=1}^K\min\left[
\Delta \Psi_k, {\Theta}_{k}(\mathbf{p})
\right].
\end{align}
To solve $\mathsf{P}1$, the only nonconvex parts are functions $\{{\Theta}_{k}(\mathbf{p})\}$.
To tackle these functions, we can leverage the framework of successive optimization, which constructs a sequence of lower bounds $\{\widehat{\Theta}_k\}$ on 
$\{\Theta_k\}$ and replaces $\{\Theta_k\}$ in $\mathsf{P}1$ with $\{\widehat{\Theta}_k\}$ to obtain the surrogate problems. 
Specifically, given any feasible solution $\{\mathbf{p}^\star\}$ to $\mathsf{P}1$, we define surrogate function
\begin{align}
&\widehat{\Theta}_{k}(\mathbf{p}|\mathbf{p}^\star)
=
\frac{\lambda_k}{\mathrm{ln}2}
\Bigg[
\mathrm{ln}\left(\sum_{l=1}^K\frac{I_{k,l}p_{l}}{\sigma^2}+1\right)
\nonumber\\
&
-
\mathrm{ln}\left(\sum_{l=1,l\neq k}^K\frac{I_{k,l}p^\star_{l}}{\sigma^2}+1\right)
\nonumber\\
&
-\left(\sum_{l=1,l\neq k}^K\frac{I_{k,l}p^\star_{l}}{\sigma^2}+1\right)^{-1}\left(\sum_{l=1,l\neq k}^K\frac{I_{k,l}p_{l}}{\sigma^2}+1\right)
+1
\Bigg], 
\nonumber
\end{align}
where 
$\lambda_k=
\frac{\left(1-\texttt{GAE}_k\right)\widetilde{T}B}{Z_{k}\sum_{k=1}^K|\mathcal{D}_k|}$.
Then the following proposition can be established.

\begin{proposition}
The function $\widehat{\Theta}_k$ satisfies the following:

\noindent(i) Lower bound: 
$\widehat{\Theta}_{k}(\mathbf{p}|\mathbf{p}^\star)\leq 
{\Theta}_{k}(\mathbf{p}).$

\noindent(ii) Concavity: $\widehat{\Theta}_{k}(\mathbf{p}|\mathbf{p}^\star)$ is concave in $\mathbf{p}$.

\noindent(iii) Zeroth- and first-order equivalence: 
\begin{align}
    \widehat{\Theta}_{k}(\mathbf{p}^\star|\mathbf{p}^\star) &  = {\Theta}_{k}(\mathbf{p}^\star), 
\
    \nabla_{\mathbf{p}}\widehat{\Theta}_{k}(\mathbf{p}^\star|\mathbf{p}^\star)  = \nabla_{\mathbf{p}} {\Theta}_{k}(\mathbf{p}^\star).
    \nonumber
\end{align}
\end{proposition}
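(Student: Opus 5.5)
Rewrite $\Theta_k$ as a difference of two logarithms and linearize the concave one; the surrogate $\widehat{\Theta}_k$ is exactly that construction. With $H_k=I_{k,k}$,
\begin{align}
\Theta_k(\mathbf{p})=\frac{\lambda_k}{\ln 2}\Big[\ln\Big(\sum_{l=1}^K\frac{I_{k,l}p_l}{\sigma^2}+1\Big)-\ln\Big(\sum_{l\neq k}\frac{I_{k,l}p_l}{\sigma^2}+1\Big)\Big]. \nonumber
\end{align}
Set $u(\mathbf{p})=\sum_{l\neq k}I_{k,l}p_l/\sigma^2+1$ and $u^\star=u(\mathbf{p}^\star)$. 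Then $\widehat{\Theta}_k$ equals $\Theta_k$ with $-\ln u$ replaced by $-\ln u^\star-u/u^\star+1$, with the first logarithm left unchanged. Every property therefore reduces to a statement about $g(u)=-\ln u$ versus $\hat g(u)=-\ln u^\star-(u-u^\star)/u^\star$. Note $\hat g$ is the first-order Taylor expansion of $g$ at $u^\star$, since $-\ln u^\star-u/u^\star+1=-\ln u^\star-(u-u^\star)/u^\star$.

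For (i): $g$ is convex on $u>0$, so it lies above its tangent, i.e., $-\ln u\ge \hat g(u)$. Equivalently, $\ln x\le x-1$ with $x=u/u^\star$. We have $u\ge 1>0$ for all $\mathbf{p}\succeq\mathbf{0}$, and $\lambda_k\ge 0$ because $\texttt{GAE}_k\le 1$ and $\widetilde{T}>0$. Multiplying the inequality by $\lambda_k/\ln 2$ and adding the common first term gives $\widehat{\Theta}_k\le\Theta_k$.

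For (ii): the first term is $\ln$ of an affine function of $\mathbf{p}$ that is positive on $\mathcal{P}$, so it is concave as a composition of a concave nondecreasing function with an affine map. The remaining terms are affine in $\mathbf{p}$. A nonnegative scaling by $\lambda_k/\ln 2$ preserves concavity.

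For (iii): setting $\mathbf{p}=\mathbf{p}^\star$ makes $-\ln u^\star-1+1=-\ln u^\star$, so the values coincide. For the gradients, the first-term gradients are identical. For the second term, the chain rule gives $\nabla_{\mathbf{p}}(-\ln u)=-(1/u)\nabla u$ and $\nabla_{\mathbf{p}}\hat g=-(1/u^\star)\nabla u$. Here $\nabla u$ is the constant vector with entries $I_{k,l}/\sigma^2$ for $l\neq k$ and $0$ for $l=k$. These two expressions agree at $u=u^\star$.

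No step is a genuine obstacle. The only care needed is checking the sign and positivity conditions ($\lambda_k\ge0$, positive log arguments) and confirming that $H_k=I_{k,k}$ allows the signal-plus-interference sum to be written as $\sum_{l=1}^K$.
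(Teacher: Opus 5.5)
Your proof is correct and matches the paper's (very terse) argument. Part (i) is the paper's check $\widehat{\Theta}_k-\Theta_k\le 0$, which is exactly your tangent inequality $\ln x\le x-1$ applied to the convex term $-\ln u$, and part (iii) is the same direct comparison of values and gradients at $\mathbf{p}^\star$. The only difference is cosmetic, in (ii): the paper verifies that the Hessian $-\frac{\lambda_k}{\ln 2}\,\frac{\mathbf{a}\mathbf{a}^T}{(\mathbf{a}^T\mathbf{p}+1)^2}$, with $\mathbf{a}=[I_{k,1},\dots,I_{k,K}]^T/\sigma^2$, is negative semidefinite, whereas you invoke the equivalent concave-of-affine composition rule.
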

\begin{proof}
Part (i) is proved by checking 
$
\widehat{\Theta}_{k}(\mathbf{p}|\mathbf{p}^\star)- 
{\Theta}_{k}(\mathbf{p})
\leq 0.
$
Part (ii) is proved by verifying the Hessian of $\widehat{\Theta}_k$ is negative semidefinite..
Part (iii) is proved by comparing the function and gradient values of $\{\widehat{\Theta}_k,\Theta_k\}$.
\end{proof}

With part (i) of \textbf{Proposition 1}, a lower bound can be obtained if we replace the functions $\{\Theta_k\}$ by $\{\widehat{\Theta}_k\}$ around a feasible point.
In particular, assuming that the solution at the $n^{\mathrm{th}}$ iteration is given by $\mathbf{p}^{[n]}$, the following problem is considered at the $(n+1)^{\mathrm{th}}$ iteration:
\begin{align}
\mathsf{P}1[n+1]:\mathop{\mathrm{max}}_{\substack{\mathbf{p}\in\mathcal{P}}}
\quad 
\sum_{k=1}^K\min\left[
\Delta \Psi_k, \widehat{\Theta}_{k}(\mathbf{p}|\mathbf{p}^{[n]})
\right].
\label{P2n+1}
\end{align}

Based on part (ii) of \textbf{Proposition 1}, the problem $\mathsf{P}1[n+1]$ can be solved by off-the-shelf software packages (e.g., Mosek) for convex programming.
Denote its optimal solution as
$\mathbf{p}^*$. 
Then we set $\mathbf{p}^{[n+1]}=\mathbf{p}^*$, such that the process repeats with solving the problem $\mathsf{P}1[n+2]$.
According to part (iii) of \textbf{Proposition 1} and \cite[Theorem 1]{sun2016majorization}, every limit point of the sequence
$(\mathbf{p}^{[0]},\mathbf{p}^{[1]},\cdots)$ is a stationary point to $\mathsf{P}1$ as long as the starting point $\mathbf{p}^{[0]}$ is feasible to $\mathsf{P}1$. 
In terms of computational complexity, $\mathsf{P}1[n+1]$ involves $K$ variables.
Therefore, the worst-case complexity for solving $\mathsf{P}1[n+1]$ is
$\mathcal{O}\big(K^{3.5}\big)$.
Consequently, the total complexity for solving $\mathsf{P}1$ is $\mathcal{O}\big(\mathcal{J}\,K^{3.5}\big)$, 
where $\mathcal{J}$ is the number of iterations needed for the algorithm to converge.

To get more insights, we consider a noise-limited operating regime in which the aggregate interference is negligible compared with the receiver noise, i.e., $\sum_{l\ne k}I_{k,l}p_l\ll\sigma^2$.
In this case the problem $\mathsf{P}1$ is approximated as
\begin{align}
\mathsf{P}2:~~
\mathop{\mathrm{max}}_{\substack{\mathbf{p}\in\mathcal{P}}}
\quad&
\sum_{k=1}^K
\min\left[
\Delta \Psi_k, 
\lambda_k
\mathrm{log}_2\left(1+\frac{H_{k}p_{k}}{\sigma^2} \right)
\right]
.
\end{align}
The following proposition gives the optimal solution to $\mathsf{P}2$ (proved based on the Karush-Kuhn-Tucker conditions of $\mathsf{P}2$).
\begin{proposition}
The optimal $\mathbf{p}^*$ to $\mathsf{P}2$ is
\begin{align}\label{pk*}
p_k^{*}(\nu)
= \min\left[\beta_k, \left[\frac{\nu\left(1 - \texttt{GAE}_k\right) \widetilde{T}B}{Z_{k}\sum_{k=1}^K|\mathcal{D}_k|} - \frac{\sigma^2}{H_k}\right]^+\right],
\end{align}
where $\beta_k=
\frac{\sigma^2}{H_k}
  \left(
  2^{\frac{Z_k|D_k|}{\widetilde TB}}-1
  \right)$ is the saturation power, and $\nu$ is the water-filling scalar.
\end{proposition}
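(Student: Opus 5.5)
The plan is to remove the nonsmooth $\min[\cdot,\cdot]$ in $\mathsf{P}2$ by showing it is equivalent to a per-robot power cap, and then to solve the resulting smooth concave program through its KKT conditions.

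\textbf{Step 1 (reduction to a capped concave program).} Write $g_k(p_k)=\lambda_k\log_2(1+H_kp_k/\sigma^2)$, which is increasing and strictly concave in $p_k$. Solving $g_k(p_k)=\Delta\Psi_k$ with $\Delta\Psi_k=\eta_k(1-\texttt{GAE}_k)$ and $\lambda_k=\frac{(1-\texttt{GAE}_k)\widetilde{T}B}{Z_k\sum_k|\mathcal{D}_k|}$ cancels the factor $(1-\texttt{GAE}_k)$ and the normalization $\sum_k|\mathcal{D}_k|$. This leaves $\log_2(1+H_kp_k/\sigma^2)=Z_k|\mathcal{D}_k|/(\widetilde{T}B)$, whose solution is exactly the saturation power $\beta_k$ of the statement. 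Hence $\min[\Delta\Psi_k,g_k(p_k)]=g_k(\min[p_k,\beta_k])$.

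Now take any feasible $\mathbf{p}$ and replace each $p_k$ by $\min[p_k,\beta_k]$. The objective is unchanged and the power constraint is still met. So $\mathsf{P}2$ is equivalent to
\begin{align}
\max_{\mathbf{p}}\ \sum_{k=1}^K g_k(p_k)\quad \mathrm{s.t.}\ \sum_{k=1}^K p_k\le P_{\mathrm{sum}},\ 0\le p_k\le\beta_k, \nonumber
\end{align}
in the sense that an optimum of this program is optimal for $\mathsf{P}2$. This program has a strictly concave separable objective over a polytope, so Slater's condition holds, KKT is necessary and sufficient, and the optimizer is unique.

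\textbf{Step 2 (KKT).} Introduce a multiplier $\mu\ge0$ for the sum-power constraint, $\alpha_k\ge0$ for $p_k\le\beta_k$, and $\xi_k\ge0$ for $p_k\ge0$. Stationarity reads
\begin{align}
\frac{\lambda_kH_k}{\ln 2\,(\sigma^2+H_kp_k)}=\mu+\alpha_k-\xi_k, \nonumber
\end{align}
together with complementary slackness. I will treat three cases.
\begin{itemize}
\item[1)] If $0<p_k<\beta_k$, then $\alpha_k=\xi_k=0$, so $p_k=\lambda_k/(\mu\ln2)-\sigma^2/H_k$.
\item[2)] If $p_k=0$, then $\xi_k\ge0$ forces $\lambda_k/(\mu\ln2)\le\sigma^2/H_k$.
\item[3)] If $p_k=\beta_k$, then $\alpha_k\ge0$ forces $\lambda_k/(\mu\ln2)-\sigma^2/H_k\ge\beta_k$.
\end{itemize}
Setting $\nu=1/(\mu\ln2)$, the three cases combine into $p_k^*=\min\big[\beta_k,[\nu\lambda_k-\sigma^2/H_k]^+\big]$, which is the expression in the statement.

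\textbf{Step 3 (the water level and degenerate cases).} The main subtlety is $\mu=0$, where $\nu$ is undefined. This happens exactly when $\sum_k\beta_k\le P_{\mathrm{sum}}$. In that case every robot saturates, and the formula still holds with $\nu\to\infty$ (any $\nu\ge\max_k(\beta_k+\sigma^2/H_k)/\lambda_k$ gives the same $\mathbf{p}^*$).

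Otherwise $\mu>0$ and the sum-power constraint is tight. The map $\nu\mapsto\sum_kp_k^*(\nu)$ is continuous and nondecreasing, equals $0$ for small $\nu$, and tends to $\sum_k\beta_k>P_{\mathrm{sum}}$ as $\nu\to\infty$. Hence some $\nu$ achieves $\sum_kp_k^*(\nu)=P_{\mathrm{sum}}$, and it can be found by bisection; any such $\nu$ yields the same (unique) $\mathbf{p}^*$. By sufficiency of KKT, this $\mathbf{p}^*$ is optimal for the reduced program and therefore for $\mathsf{P}2$.

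The step needing the most care is the equivalence in Step 1, which is what lets the capped water-filling structure appear. The KKT algebra in Step 2 is routine.
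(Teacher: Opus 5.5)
Your argument is correct and takes the same route as the paper, which derives the capped water-filling form from the KKT conditions of $\mathsf{P}2$. Your min-to-cap reduction is what makes those KKT conditions directly usable, and your Step 3 reproduces the paper's bisection-versus-saturation case split. The one slip is the degenerate case $\texttt{GAE}_k=1$: there $\lambda_k=0$, strict concavity fails, and the formula forces $p_k^*=0$, so the threshold and the limit in Step 3 should be $\sum_{k\in\mathcal{K}_+}\beta_k$ rather than $\sum_k\beta_k$, matching the paper's restriction to the active set $\mathcal{K}_+$.
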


The value of $\nu$ in \eqref{pk*} can be obtained as follows.
Let active robot set
$\mathcal{K}_{+}:=\left\{k\in\mathcal{K}:\lambda_k>0\right\}$. 
If $\sum_{k\in\mathcal{K}^+} \beta_k \geq P_{\text{sum}}$, then 
$\nu$ is determined via bisection such that
$\sum_{k\in\mathcal{K}^+}
p_k^{*}(\nu)
= P_{\text{sum}}$. Otherwise, the solution is
  \begin{equation}
      p_k^{*}(\nu)
      =
      \begin{cases}
          \beta_k,
          & k\in\mathcal{K}_{+},\\[2pt]
          0,
          & k\notin\mathcal{K}_{+}.
      \end{cases}
\end{equation}
According to \textbf{Proposition 2}, the transmit power $p_k$ is non-decreasing in $\left(1-\texttt{GAE}_k\right)$, which is the EQA error probability in the simulated exam.
Moreover, if $\texttt{GAE}_k=1$, then we have $p_k^{*}=0$, meaning that this robot memory has no contribution to the MA-EQA.
The above observations disclose that in MA-EQA systems, QoM will have a significant impact on the physical-layer design.

\section{Experiments}\label{section5}

We implement the proposed model and algorithm exploiting Python in CARLA \cite{carla} on a Linux workstation with RTX 3090\,Ti. 
We adopt \textit{Qwen3-VL-8B} for image captioning, \textit{mxbai-embed-large-v1} for text embedding, \textit{Milvus} for database building, and \textit{Qwen3-8B} for EQA inference.
Robots are simulated as drones, which are randomly distributed across the default spawn points of the CARLA Town04 map. Each drone then inspects along a path covering a total distance of 500 meters from its initial spawning location.
The inspection duration is $30$\,s, and the frame rate is $35$\,FPS, resulting in $1050$ recorded frames for each drone. 
Each camera image has a data volume of $Z_k=200$\,KB. 
We randomly generated $10$ objects (i.e., [green car, blue car, white car, purple car, black car, fire truck, yamaha, bus, taxi, traffic cone]).
The questions consist of three types, with a total of $30$ questions in each realization. 
Take fire truck as an example: 
\begin{itemize}
   \item[1)] Is there a fire truck? The answer should be YES/NO; 
   \item[2)] Where is the fire truck? The answer should be an explicit [x, y, yaw] coordinate that is within $50$\,m from the ground truth object location; 
   \item[3)] Which drone sees the fire truck? The answer should be the drone ID $k$.
\end{itemize}

The system bandwidth is $B=10\, \text{MHz}$ and noise power is $\sigma^2=-70$\,dBm \cite{wang2020machine}.
The total time budget is $T=10$\,minutes.
The sum power is $P_{\text{sum}}=200$\,mW.
The robot-server distance is simulated as $d_k\sim\mathcal{U}[50,250]$\,m, where $\mathcal{U}$ denotes uniform distribution.
The wireless channel follows Rayleigh fading, i.e., 
$\mathbf{h}_k = \sqrt{h_0 \omega_k d_k^{-\alpha}}\mathbf{h}^{\text{CN}}_{k}$, where $h_0 = -30 \, \text{dB}$ is the path loss at $1 \, \text{m}$, $\omega_k = -20$\,dB is the fixed shadow fading, $\alpha=3$ is the path loss exponent, and the phase component $\mathbf{h}_{k}^{\mathrm{CN}}\sim\mathcal{CN}(\mathbf{0},\mathbf{I}_N)$ with $N=256$. 
Quantitative results are obtained by averaging $50$ random simulation runs, with independent channel realizations in each run.  

We compare our algorithms to the following baselines:
\begin{itemize}
   \item[1)] \textbf{MaxRate} \cite{ye2025integrated}: Communication sum-rate maximization\footnote{We adopt the objective function and power constraint in \cite{ye2025integrated} and ignore the trajectory and sensing constraints therein.};
   \item[2)] \textbf{MaxCov} \cite{cheng2025development}: Sensing coverage maximization;
   \item[3)] \textbf{Fairness} \cite{zheng2016wireless}: Fairness design by min-rate maximization;
   \item[4)] \textbf{Greedy} \cite{uehara2023k}: Low-GAE-first principle with no consideration for communication conditions;
   \item[5)] \textbf{Remember}\cite{anwar2025remembr}: Baseline standalone QA with $\mathcal{M}=\mathcal{M}_0$ and no multi-agent memory aggregation;
   \item[6)] \textbf{SemCom} \cite{liu2025intelligent}: Adopt semantic similarity of embeddings to select robot memories.
\end{itemize}

\subsection{Experiment 1: Evaluation of GAE}\label{section5-1}

\begin{figure}[!t]
	\centering
    \begin{subfigure}{0.95\linewidth}
		\centering
		\includegraphics[width=\linewidth]{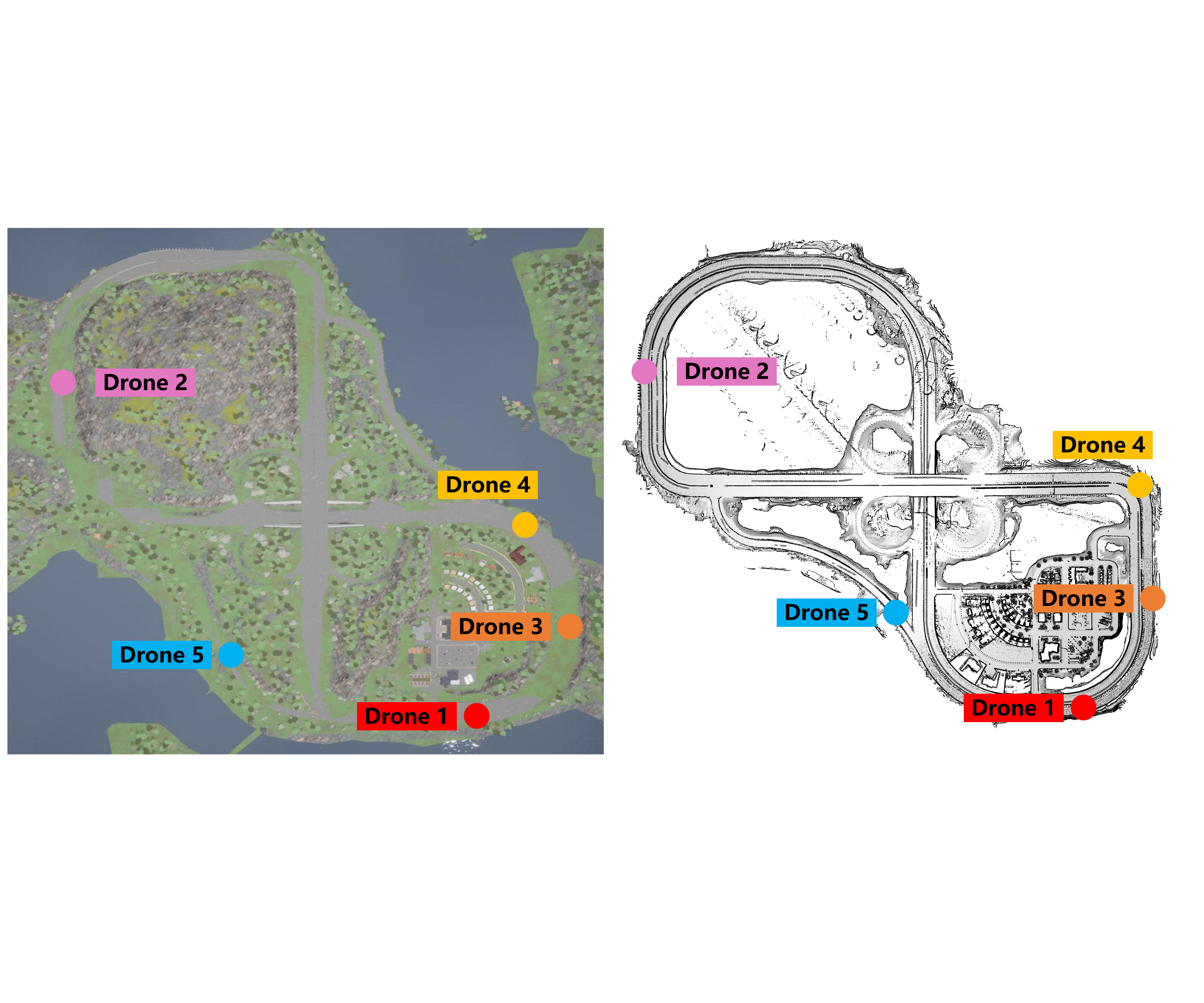}
		\caption{The $5$-robot settings for GAE evaluation.}
	\end{subfigure}
    \\
    \vspace{0.1in}
     	\begin{subfigure}{0.95\linewidth}
		\centering
		\includegraphics[width=\linewidth]{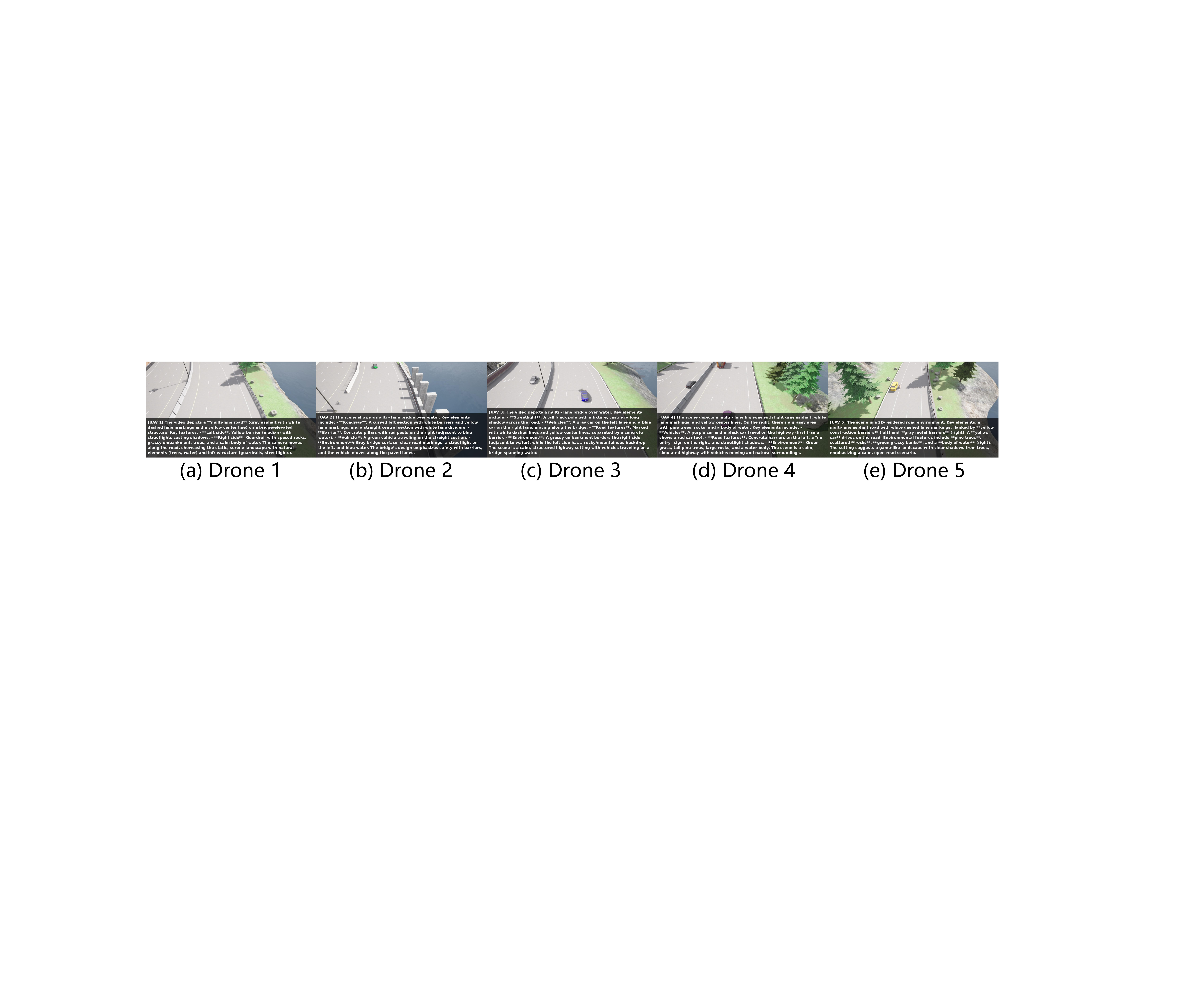}
		\caption{Visualization of robot frames and VLM captions.}
	\end{subfigure}
    \\
    \vspace{0.1in}
	\begin{subfigure}{0.95\linewidth}
		\centering
		\includegraphics[width=\linewidth]{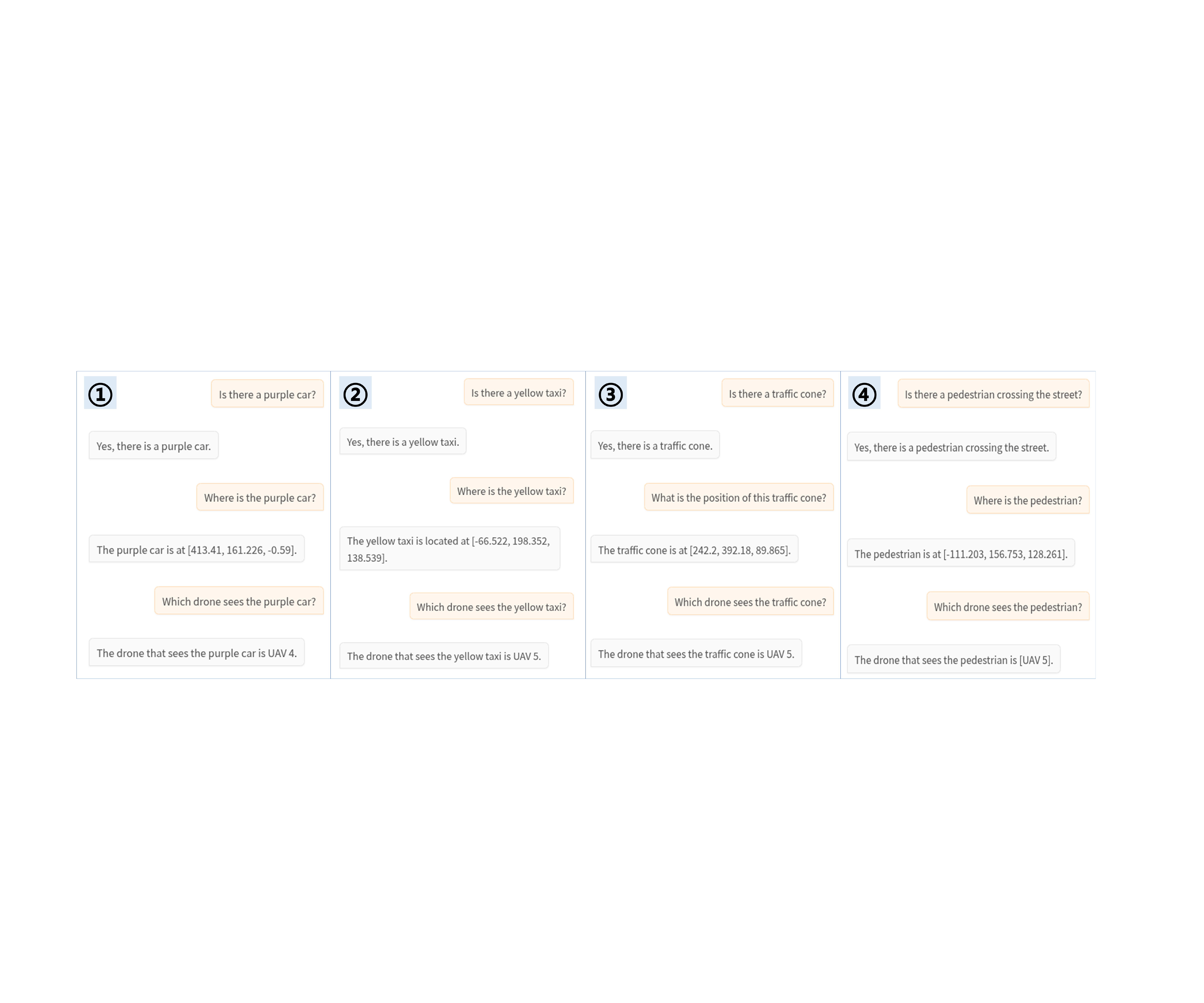}
		\caption{MA-EQA for memory $\mathcal{M}_4\cup\mathcal{M}_0$.}
	\end{subfigure}
    \vspace{-0.02in}
	\caption{Verification of MA-EQA in the $5$-robot scenario.}
	\label{fig:fig3}
    \vspace{-0.2in}
\end{figure}

First, we conduct experiments to validate the effectiveness of the proposed GAE. 
We consider the case of $K=5$ and the spawn points are illustrated in Fig. \ref{fig:fig3}a. 
To simulate the heterogeneous memories, we place $[0,1,2,3,4]$ objects in inspection regions of robots $[1,2,3,4,5]$. 
The VLM captioning results are illustrated in Fig. \ref{fig:fig3}b. 
We observe that the scene understanding is accurate under \textit{Qwen3-VL-8B}. 

Then we set historical memory as $\mathcal{M}_0=\mathcal{M}_5$, and measure the QoMs of robots 1--5. 
The results averaged over $50$ random runs are shown in Table~\ref{TabI}. 
In each run, we generate $L_k=5$ questions from $10$ randomly sampled images for robot $k$.
It can be seen that both the SemCom and GAE models identify the memory similarity between the sampled images from robot $5$ and the initial $\mathcal{M}_0$.
However, the SemCom model fails in distinguishing the memory values of other robots, as their similarity scores are close to each other. 
In contrast, the GAE scores of robots 1--4 match the fact that the semantic information becomes richer as the drone index increases.
This implies that the GAE model can not only understand semantic similarity, but also semantic richness. 

Lastly, we test the final EQA accuracy under different memory aggregation schemes: $
\{\mathcal{M}_0\cup \mathcal{M}_k\}_{k=1}^{5}$.
It can be seen from Table~\ref{TabI} that the novelty score $(1-\texttt{GAE}_k)$ is positively correlated with the EQA accuracy improvement obtained after incorporating $\mathcal{M}_k$.
This shows that maximizing $\texttt{QoM}\left[\mathcal{M}(\mathbf{p})\right]$ in \eqref{qom} indeed helps in enhancing MA-EQA.
The EQA for memory $\mathcal{M}_4\cup\mathcal{M}_0$ is shown in Fig. \ref{fig:fig3}c. The answers regarding object existence, positions, and observing robots of purple car, yellow taxi, traffic cone, pedestrian are all correct.

\begin{table}[!t] 
  \centering
  \caption{Evaluation of GAE. \colorbox{purple2}{Purple} denotes the memory \textbf{selected} by the method. \textbf{Bold} denotes the \textbf{highest} value.}
  \resizebox{0.47\textwidth}{!}{%
  \begin{tabular}{cccc}
    \toprule
    \multirow{2}{*}{Robot Index} 
    & SemCom \cite{liu2025intelligent} 
    & \multicolumn{1}{c}{GAE (ours)} 
    & EQA Accuracy \\
    \cmidrule(lr){2-2} \cmidrule(lr){3-3} \cmidrule(lr){4-4} 
    & BERT Similarity & $\texttt{GAE}_k$ & $\mathcal{M}_0\cup\mathcal{M}_k$ \\
    \midrule
    $k=1$ & 0.89 & \textbf{0.79}  & 39\% \\
    $k=2$ & \cellcolor{purple2}0.86 & 0.45   & 48\% \\
    $k=3$ & 0.87 & 0.35  & 59\% \\
    $k=4$ & 0.90 & \cellcolor{purple2}0.27  & \cellcolor{purple2}\textbf{70\%} \\
    $k=5$ & \textbf{0.94} & 0.72  & 38\% \\
    \bottomrule
  \end{tabular}
  }
  \label{TabI}
\end{table}

\begin{figure*}[!t]
    \centering
    \includegraphics[width=0.95\textwidth]{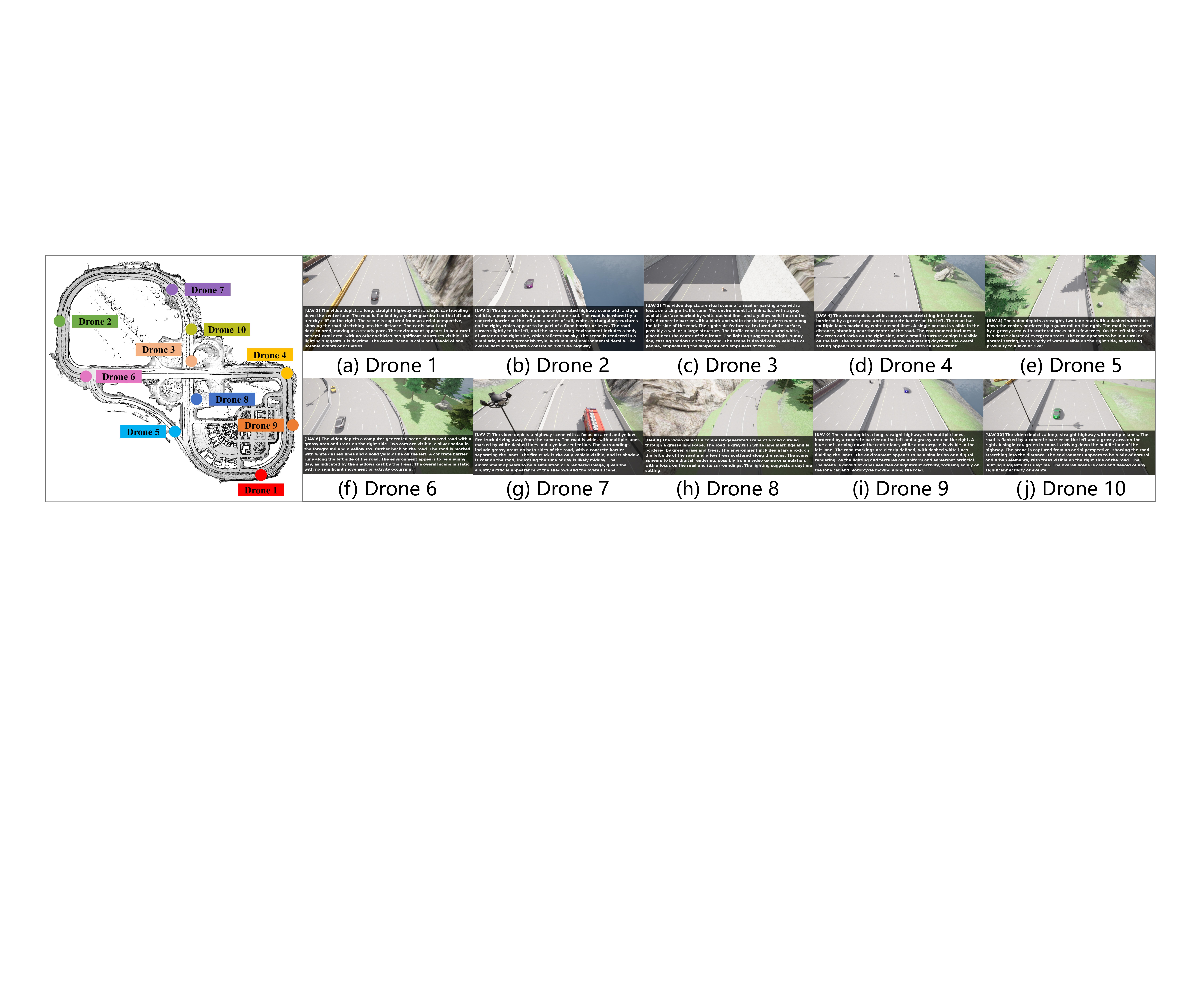}
    \caption{Visualization of drone configurations, image frames, and VLM captions.}
    \label{fig:fig4}
\end{figure*}

\subsection{Experiment 2: Evaluation of MCPA}

Next, we consider the case of $K=10$, where the 10-drone scenario configuration is shown in Fig. \ref{fig:fig4}.
We randomly choose $5$ robots to build the initial ground memory $\mathcal{M}_0$.
For GAE, we generate $L_k=10$ questions from $10$ randomly sampled images for each drone $k$. 
The quantitative results are illustrated in Table~\ref{TabII}. 
Our MCPA achieves the highest EQA accuracy of $95.35\%$ and best QoMs of $0.21$. 
Compared to the standalone memory Remember \cite{anwar2025remembr}, the gain is over $50\%$, which demonstrates the necessity of adopting memory aggregation in city-scale EQA. 
Although MaxRate achieves the highest communication rate, its EQA accuracy is over $30$ points behind our scheme. 
This implies that communication is not always beneficial to memorization, since closer robots may convey useless data that cannot update the knowledge database at the server.
The Greedy scheme may over-allocate resources to remote robots, resulting in fewer connected robots and compromised robustness.
The approach closest to ours is MaxCov, which connects the largest number of drones. 
However, MaxCov also leads to inferior performance, since sensing is not guaranteed to result in new memory.
New memory is created from sensing only when the sensed region contains information that is sufficiently fresh.

\begin{table}[!t]
\centering
\setlength{\tabcolsep}{1.5mm}
\renewcommand{\arraystretch}{1.35}
\caption{Quantitative results of QA tasks. \colorbox{purple2}{Purple} denotes the \textbf{best} performance. 
\#Drones denotes the number of drones with more-than-half data uploads.}
\scalebox{0.96}{
\begin{tabular}{lc|cccc}
\toprule
{Method} & Ref. & EQA Accuracy $\uparrow$ & QoM $\uparrow$ & \#Drones$\uparrow$ & Sum-Rate$\uparrow$ \\
\midrule
MaxCov & \cite{cheng2025development} & 86.09\% & 0.19 & \cellcolor{purple2}\textbf{7.5} & 22.28\,Mbps \\
Greedy & \cite{uehara2023k} & 77.60\% & 0.14 & 3.4 & 30.77\,Mbps \\
Fairness & \cite{zheng2016wireless} & 61.88\% & 0.09 & 1.5 & 10.64\,Mbps \\
MaxRate & \cite{ye2025integrated} & 61.69\% & 0.09 & 3.6 & \cellcolor{purple2}59.04\,Mbps \\
Remember & \cite{anwar2025remembr} & 41.0\% & -- & -- & --  \\
\midrule
MCPA & (\textbf{Ours})  & \cellcolor{purple2}\textbf{95.35\%} & \cellcolor{purple2}\textbf{0.21} & 6.0 & 17.84\,Mbps \\
\bottomrule
\end{tabular}
}
\label{TabII}
\end{table}

\begin{figure*}[!t]
	\centering
	\begin{subfigure}{0.24\linewidth}
		\centering
		\includegraphics[width=1\linewidth]{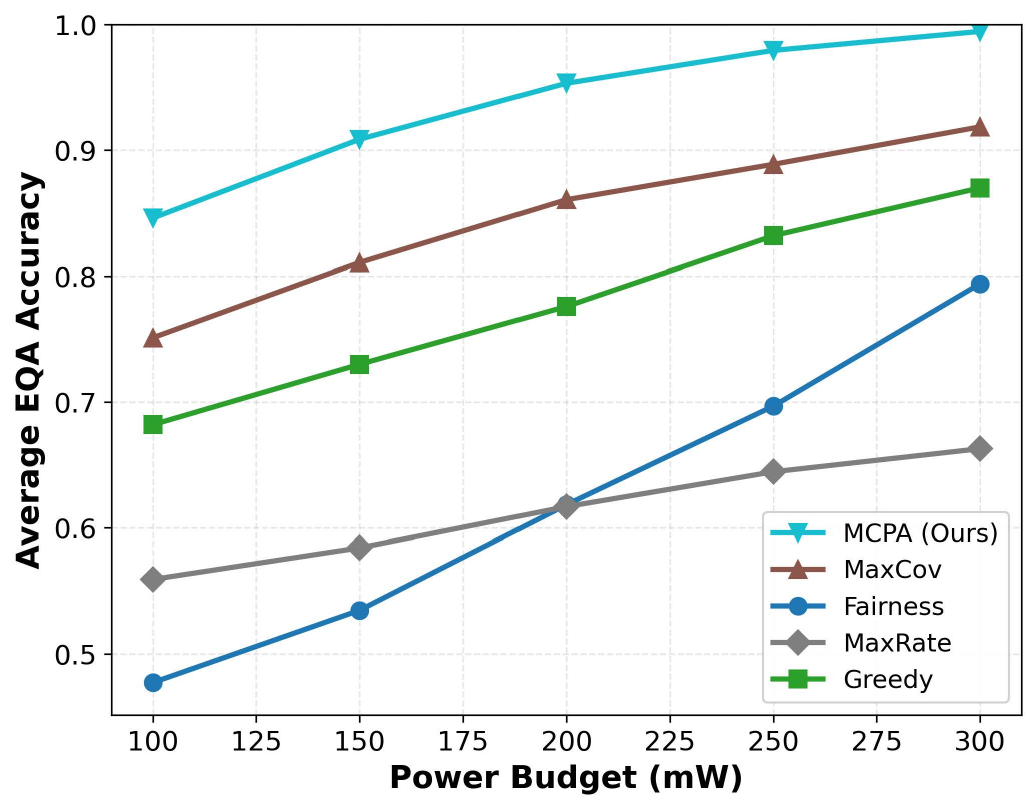}
		\caption{EQA accuracy versus $P_{\mathrm{sum}}$.}
	\end{subfigure}
 	\begin{subfigure}{0.24\linewidth}
		\centering
		\includegraphics[width=1\linewidth]{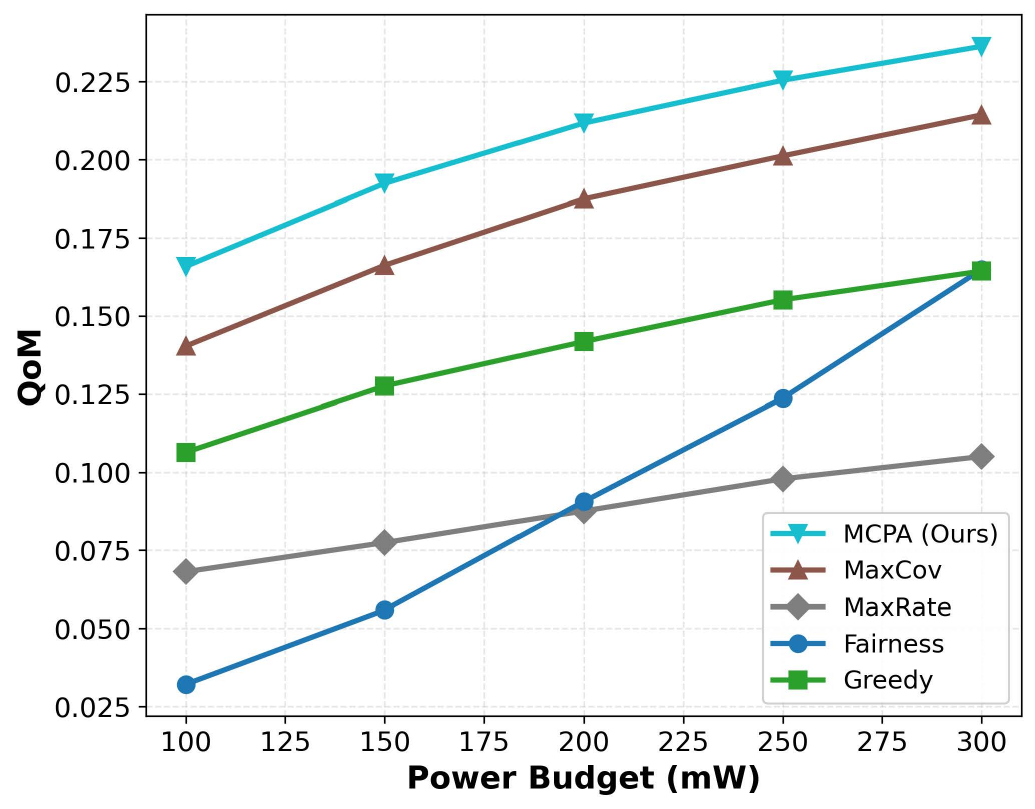}
        \caption{QoM versus $P_{\mathrm{sum}}$.}
	\end{subfigure}
     \begin{subfigure}{0.24\linewidth}
		\centering
		\includegraphics[width=1\linewidth]{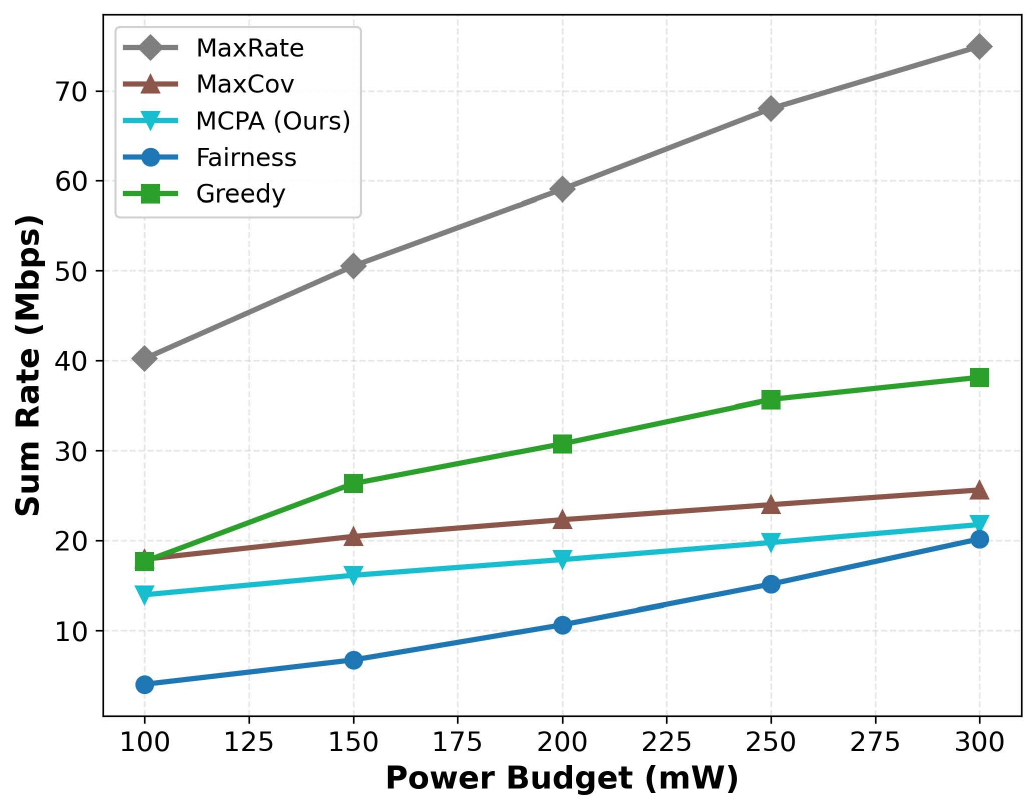}
		\caption{Sum-rate versus $P_{\mathrm{sum}}$.}
	\end{subfigure}
    \begin{subfigure}{0.24\linewidth}
		\centering
		\includegraphics[width=1\linewidth]{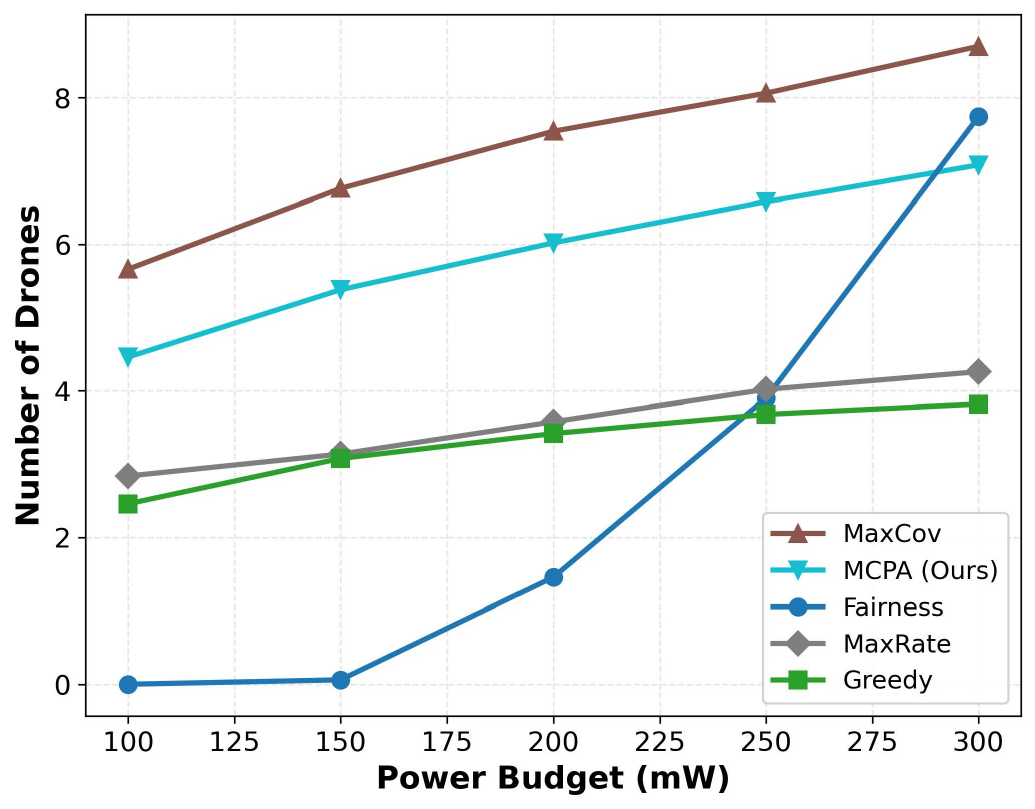}
		\caption{Drone numbers versus $P_{\mathrm{sum}}$.}
	\end{subfigure}
	\caption{Comparison of EQA accuracy, QoM, sum-rate, and robot quantity under different power budgets.}
	\label{fig:fig5}
    \vspace{-0.05in}
\end{figure*}

Finally, we compare the proposed MCPA to the benchmark schemes under different power budgets $P_{\mathrm{sum}}=\{100,150,200,250,300\}$\,mW. 
As shown in Fig.~\ref{fig:fig5}, the EQA accuracies, QoMs, sum-rates, and robot quantities of all schemes increase as the power budget increases. 
But no matter how the resource budget varies, our MCPA demonstrates superior accuracy and QoM compared to four other methods across all the power budgets. 
This confirms its adaptiveness to different configurations.
We also observe from Fig.~\ref{fig:fig5}c that the MaxRate scheme achieves $3.3\times$ data rate of our MCPA, and from Fig.~\ref{fig:fig5}d that the MaxCov scheme attains the largest inspection coverage. However, both schemes underperform our MCPA in terms of memorization. This demonstrates that communication or sensing is not always beneficial to memorization, confirming that not all robot memories matter.

\section{Conclusion}\label{section6}

This paper introduced the memory centric communication paradigm to the emerging MA-EQA application. 
By adopting a GAE pipeline for forward EQA simulation, a QoM model was developed.
Building on QoM, MCPA was derived for memory efficient robotic resource allocation.
The proposed framework establishes a QoM-based interface that bridges embodied intelligence tasks and physical-layer decisions.
Future work will investigate the joint optimization of EQA accuracy and robot battery lifetime, with stochastic optimization for time-varying channels.

\bibliographystyle{IEEEtran}
\bibliography{ref}

\end{document}